\newtheorem{proposition}{Proposition}
\begin{document}

\title{Rethinking Residual Connection in Training Large-Scale Spiking Neural Networks}

\author{Yudong Li, Yunlin Lei, Xu Yang
\thanks{Yudong Li, Yunlin Lei, and Xu Yang are with School of Computer Science and Technology, Beijing Institute of Technology, Beijing 100081, China}
\thanks{Corresponding author: Xu Yang}}

\markboth{}
{}


\maketitle

\begin{abstract}
Spiking Neural Network (SNN) is known as the most famous
brain-inspired model, but the non-differentiable spiking mechanism makes it hard to train
large-scale SNNs. To facilitate the training
of large-scale SNNs, many training methods are borrowed from 
Artificial Neural Networks (ANNs), among which deep residual
learning is the most commonly used.
But the unique features of SNNs make prior intuition built upon
ANNs not available for SNNs. Although there are a few studies
that have made some pioneer attempts on the topology of Spiking ResNet,
the advantages of different connections remain unclear.
To tackle this issue, we analyze the merits and
limitations of various residual connections and empirically
demonstrate our ideas with extensive experiments.
Then, based on our observations, we
abstract the best-performing connections into densely additive (DA) connection,
extend such a concept to other topologies, and propose four architectures
for training large-scale SNNs, termed DANet, which brings up to
13.24\% accuracy gain on ImageNet.
Besides, in order to present a detailed methodology
for designing the topology of large-scale SNNs,
we further conduct in-depth discussions on
their applicable scenarios in terms of their performance
on various scales of datasets
and demonstrate their advantages over prior architectures.
At a low training expense,
our best-performing ResNet-50/101/152
obtain 73.71\%/76.13\%/77.22\% top-1 accuracy on ImageNet with 4 time steps.
We believe that this work shall give more insights for future works to design
the topology of their networks and promote the development of large-scale SNNs.
The code will be publicly available.
\end{abstract}

\begin{IEEEkeywords}
Spiking Neural Network, Residual Connection, Dense Additive Connection, Deep Neural Architecture.
\end{IEEEkeywords}

\section{Introduction}
\IEEEPARstart{S}{piking} Neural Networks (SNNs), the third generation of
Artificial Neural Networks (ANNs) \cite{maass1997networks}, are considered a promising
bionic model and have made great progress over the last few years
\cite{li2022neuromorphic,kim2021revisiting,fang2021incorporating,rathi2021diet}.
Nevertheless, the non-differentiable spiking
mechanism makes it hard to train large-scale SNNs.
In general, there are two main routes to obtain large-scale SNNs.
The first is to convert a
pre-trained ANN to its SNN version
\cite{hu2018spiking, han2020rmp, sengupta2019going}. The activation values of an ANN are
approximated by the average firing rates of the corresponding SNN.
The SNNs obtained by this method can achieve
almost lossless accuracy, but they demand
long simulation time steps to reach a sufficiently
accurate estimation. The second method is to directly train a SNN
with approximated gradients and our study falls under this section.
There are a lot of methods to approximate gradients~\cite{xiao2021training,meng2022training,neftci2019surrogate},
among which the surrogate gradient (SG) method is the most widely
used~\cite{neftci2019surrogate}. The SG method keeps the original forward process unchanged
and adopts a surrogate function
to replace the non-differentiable spiking function
in the computational graph.
The SNNs obtained by the second method demand only a few time steps to
reach performance saturation but their performance is often worse than their ANN
counterparts.
In a nutshell, the above methods have their own merits and limitations
and there are a lot of outstanding works devoted to improving them
\cite{xiao2021training,fang2021deep,yao2021temporal,kim2022exploring},
most of which are borrowed from the mature
training methods of ANNs~\cite{he2016deep,he2016identity,bai2019deep,bai2020multiscale,hu2018squeeze,frankle2018lottery}.

Among these methods, the most frequently used approach is
deep residual learning~\cite{he2016deep}, which has lots of variants
\cite{he2016deep,he2016identity,he2019bag}.
However, the spatio-temporal characteristics of spiking neurons (SNs) make
SNNs critical to the selection of these variants and prior
intuition built upon ANNs is not available for SNNs anymore. Although
there are a few studies that have made some pioneer attempts on the topology of
Spiking ResNet~\cite{zheng2021going,fang2021deep,meng2022training,hu2021advancing},
it is still unclear which one performs better.
Thus, we analyze the
merits and limitations of various residual connections.
We found that, although the intrinsic sparsity of SNNs brings the energy-efficient property for SNNs,
it also results in sparse updates during
backpropagation, which hinders the learning of SNNs.
And minor modifications on the topologies will greatly affect the spiking
patterns of SNNs, which are of great importance to the flow of gradients and can greatly
enhance the performance of SNNs.
With extensive experiments, we empirically demonstrate our analysis and
conclude the properties of different residual connections.

Furthermore, since the discussed residual connections can be generally classified
into two classes depending on the existence of interblock SNs
({\em i.e.} whether there are SNs outside residual connections),
it is natural to borrow from the better class to improve the performance of the other class.
And we found that the best-performing class ({\em i.e.} without interblock SNs) can be abstracted into
densely additive (DA) connection. When such a concept is extended to the connections
with interblock SNs, the issue caused by interblock SNs can be effectively alleviated
and we term the series of SNNs derived from DA connection as DANet.
To further present a detailed methodology for designing the topology of large-scale SNNs, we also conduct
in-depth discussions on the applicable scenarios of the DANet family in terms of their performance on various scales of
datasets. At a low training expense, our best performing ResNet-50/101/152 obtain 73.71\%/76.13\%/77.22\% top-1
accuracy on ImageNet with 4 time steps. Compared to the Spiking-ResNet/SEW-ResNet reported in~\cite{fang2021deep},
our corresponding architectures outperform
them by up to 13.24\%/7.96\% on ImageNet with about a quarter of their training cost.
The main contributions of this paper can be summarized as follows:
\begin{itemize}
    \item We analyze the merits and limitations of various residual connections and empirically demonstrate our analysis.
    \item We abstract the concept of densely additive connection, extend it to other topologies, and propose the DANet family, which obtains impressive performance on ImageNet.
    \item To present a detailed methodology for designing large-scale SNNs, we analyze the performance of the DANet family on various scales of datasets and demonstrate their advantages over prior architectures.
\end{itemize}

\section{Related Work}
Since the proposal of residual connection~\cite{he2016deep}, there have been a lot of variants and
applications that are improving it or utilizing
it~\cite{he2016identity,he2019bag,vaswani2017attention,dosovitskiy2020image}. And
residual connection is widely adopted for training large-scale SNNs as
well~\cite{zheng2021going,fang2021deep,meng2022training,hu2021advancing}.
\cite{zheng2021going} are the first to extend the depth of directly trained large-scale SNNs to 50
layers.
They adopt the dedicatedly designed Batch Normalization (BN) method to
help stabilize the firing rates of SNs and facilitate the training of deeper SNNs.
We argue that a
similar effect can be achieved by simple modifications on the
topology of SNNs with specific parameters. And our DANet-C/D can further
alleviate the negative effect brought by interblock SNs.
Then, \cite{fang2021deep} propose SEW-ResNet, which further extends the depth of SNNs
to 152 layers by modifying the topology of the original residual connection.
However, they only discuss two connections and neglect the influence of spiking patterns on the learning
of large-scale SNNs.
Concurrent with SEW, \cite{hu2021advancing} propose MS-ResNet,
whose topology is quite similar to our DANet-A.
But they adopt tdBN \cite{zheng2021going} for normalization,
which makes their firing pattern quite different from ours and
results in their inferior performance to our DANet-A.
As SEW, they do not compare their method with other connections
without interblock SNs either,
while we study the properties of various connections and
obtain higher accuracy on ImageNet with much less training cost.
There are also a few studies adopting other topologies or conducting neural
architecture search for SNNs~\cite{meng2022training,kim2022neural},
but they do not demonstrate the superiority of their architectures either intuitively or theoretically and
cannot provide instructions on how to design the topology of large-scale SNNs.

Besides, the idea of DANet is quite similar to the DenseNet~\cite{huang2017densely} in ANNs. However,
DenseNet adopts concatenation as its dense operation while we adopt addition. For ANNs, DA operation will
cause numerical instability and disrupt training. For SNNs, dense concatenation will concate
sparse binary feature maps and accumulate estimation error, which will greatly harm the performance of large-scale SNNs
as we demonstrated in Sec.~\ref{ablation:DO}. And since SNNs will inherently decay input, dense addition will not
result in numerical instability but facilitate the flow of gradients in large-scale SNNs.
Thus, the design of DANet is counterintuitive for ANNs
but well suited for SNNs.

\begin{figure*}[t]
  \centering
  \subfloat[]{
  \label{FP-A}
  \centering
  \includegraphics[height=0.9\columnwidth]{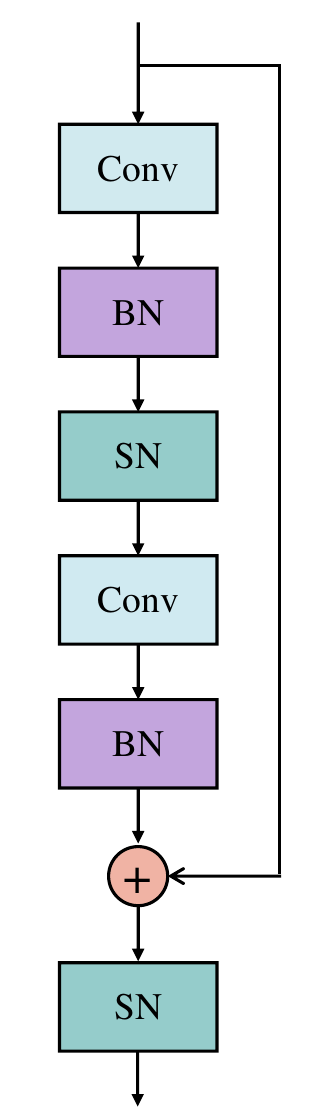}
  }
  \subfloat[]{
  \label{FP-B}
  \centering
  \includegraphics[height=0.9\columnwidth]{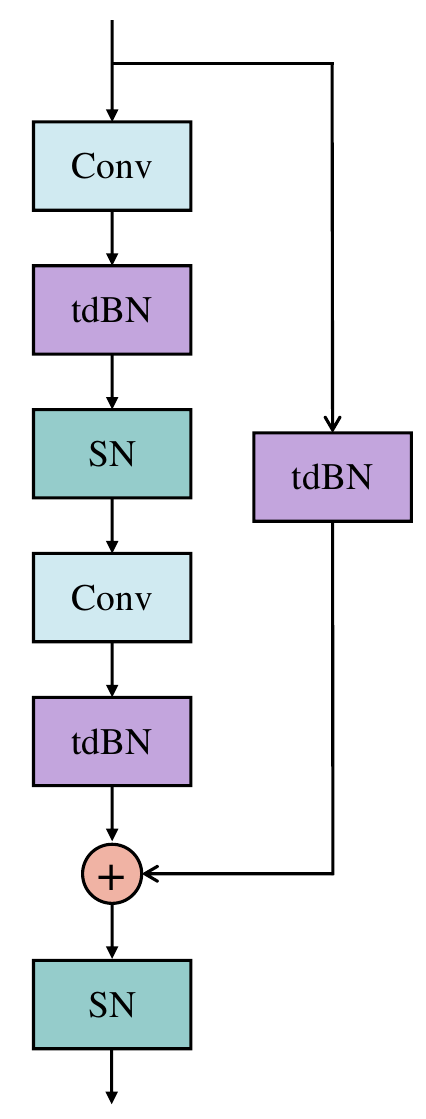}
  }
  \subfloat[]{
  \label{FP-C}
  \centering
  \includegraphics[height=0.9\columnwidth]{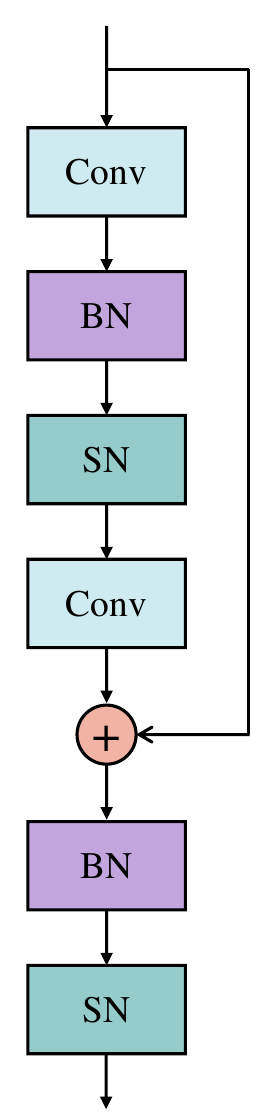}
  }
  \subfloat[]{
  \label{FP-D}
  \centering
  \includegraphics[height=0.9\columnwidth]{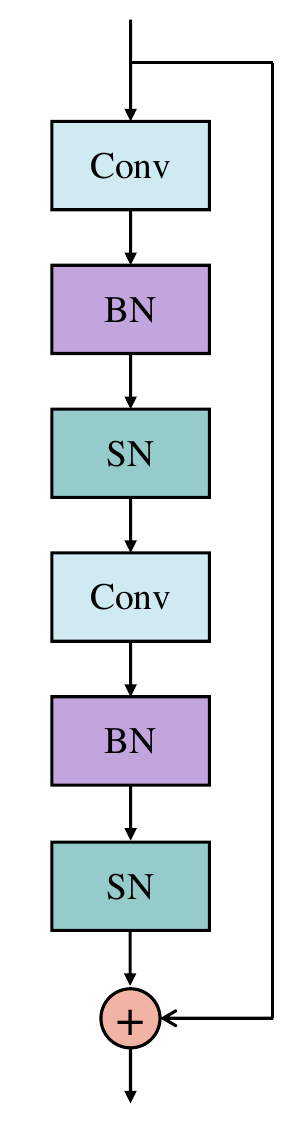}
  }
  \subfloat[]{
  \label{FP-E}
  \centering
  \includegraphics[height=0.9\columnwidth]{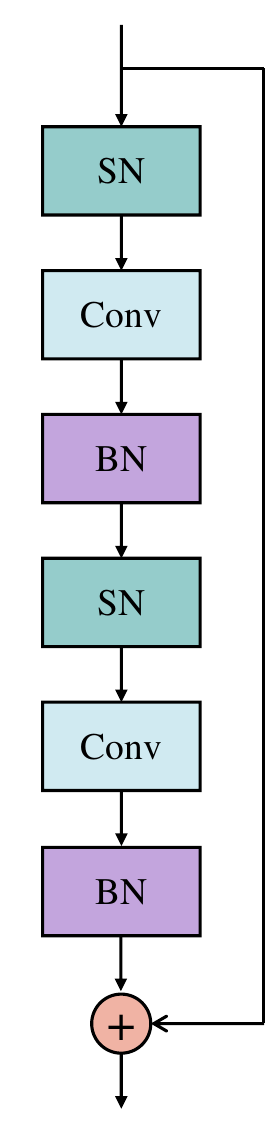}
  }
  \subfloat[]{
  \label{FP-F}
  \centering
  \includegraphics[height=0.9\columnwidth]{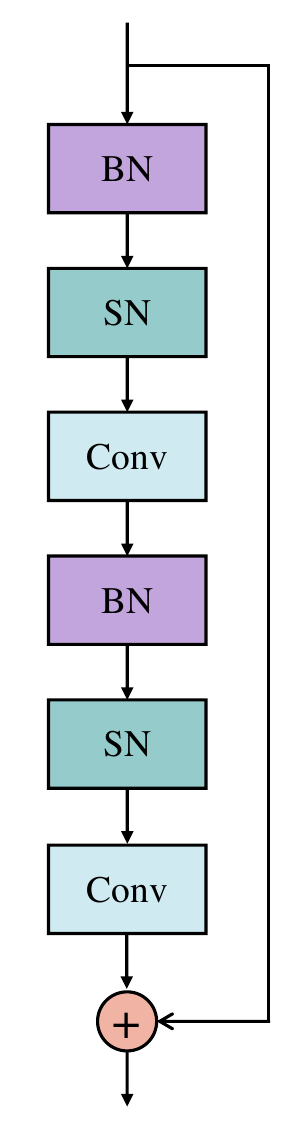}
  }
  \caption{Illustrations of different residual connections. (a) original. (b) tdBN. (c) BN after addition. (d) SEW. (e) PA-A. (f) PA-B.}
  \label{fig1}
\end{figure*}

\section{Preliminaries}
\subsection{Spiking Neuron Model}
Spiking Neuron (SN) is the basic computing unit of SNNs. The commonly used
integrate and fire (IF) and leaky integrate and fire (LIF) models \cite{burkitt2006review}
can be defined by the following equations:
\begin{equation}
    \label{eq2}
    \begin{gathered}
    V[t] = \left\{\begin{aligned}
    & (1 - \frac{1}{\tau}) \cdot U[t - 1] + \frac{1}{\tau} \cdot (X[t] + u_{rest}), & & LIF \\
    & U[t - 1] + X[t], & & IF
    \end{aligned}\right.
    \\
    S[t] = \Theta(V[t] - V_{th}),\\
    U[t] = V[t] \cdot (1-S[t]) + u_{rest} \cdot S[t],
    \end{gathered}
\end{equation}
where $t$ is the time step, $X$ is the input collected by synapses, $S$ is the generated spikes,
and $V$ and $U$ are the membrane potential before and after the generation
of spikes, respectively.
$V_{th}$ is the spiking threshold, $u_{rest}$ is the resting potential, and $\tau$ is
the membrane time constant, which are set as 1, 0, and 2, respectively.
$\Theta(x)$ is the Heaviside step function, which
outputs 1 when $x >= 0$, otherwise 0.
When $V \geq V_{th}$, the neuron will omit a spike and $U$ will be reset to $u_{rest}$.
For both IF and LIF models, $S[t]$ is the final output at time step $t$.

\subsection{Surrogate Function}
During backpropagation, we adopt the Sigmoid function as our surrogate function:
\begin{equation}
    \begin{gathered}
    \bar{\Theta}(x) = \frac{1}{1 + e^{- \alpha x}},\\
    \bar{\Theta}'(x) = \alpha \cdot \bar{\Theta}(x) \cdot (1-\bar{\Theta}(x)),
    \end{gathered}
\end{equation}
where $\alpha$ is the hyper-parameter that controls the slope of the
surrogate function. The bigger the $\alpha$, the steeper the function,
and we set $\alpha = 4$ in all the experiments.
The Heaviside function works fine during forward propagation and the surrogate function
takes its place in the computational graph during backpropagation. In
this way, we manage to perform backpropagation as ANNs do.

\section{Methodology}
\subsection{Residual Connection}
\label{method:ImF}
In this section, we will analyze six kinds of residual connections, as depicted in
Fig.~\ref{fig1}. Fig.~\subref*{FP-A} is the original residual connection
(abbreviated as Origin) in ANNs, but it often
obtains poor results in SNNs. \cite{zheng2021going}
propose tdBN as shown in Fig.~\subref*{FP-B}, which tries to force the
input distribution of each SN to be $N(0,V_{th}^2)$ when initialization.
We argue that the same effect
can be achieved by the connection shown in Fig.~\subref*{FP-C} with
a proper hyper-parameter setting,
which we abbreviate as BAA in the following. Since
each SN in BAA follows BN, its initial input distribution can be approximated by
$N(0, V_{th}^2)$ when $V_{th} = 1$, which is our default setting.
As tdBN can help stabilize training, BAA should have comparable performance
to tdBN. However, Origin, tdBN, and BAA are not suitable for training deep SNNs
according to {\bf Proposition~\ref{proposition1}}.

\begin{proposition}
    \label{proposition1}
    The interblock modules in residual connections tend to disrupt
    gradients, which make these connections not suitable for training
    deep SNNs.
\end{proposition}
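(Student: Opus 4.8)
The plan is to trace the backward gradient through a residual block that contains an interblock SN and show that the surrogate derivative contributed by that neuron appears as a \emph{multiplicative} factor on the entire gradient signal — the identity shortcut included — so that stacking many such blocks forces the gradient to contract geometrically with depth.

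First I would write the forward recursion for a connection possessing an interblock SN, taking the Origin connection (Fig.~\subref*{FP-A}) as the representative case. Each block produces $o_l = \Theta(V_l - V_{th})$ with membrane potential $V_l = o_{l-1} + F_l(o_{l-1})$, where $F_l$ is the residual branch; the decisive structural feature is that the spiking nonlinearity sits \emph{after} the summation, so in the backward graph the shortcut is not a clean identity path. Next I would differentiate one block using the surrogate rule $\Theta' \to \bar{\Theta}'$ from Sec.~\emph{Surrogate Function}, obtaining
\[
\frac{\partial o_l}{\partial o_{l-1}} = \bar{\Theta}'(V_l - V_{th})\Bigl(I + \frac{\partial F_l}{\partial o_{l-1}}\Bigr),
\]
the essential point being that $\bar{\Theta}'(V_l - V_{th})$ multiplies \emph{both} the identity $I$ and the branch Jacobian. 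Chaining across $L$ blocks then gives
\[
\frac{\partial o_L}{\partial o_0} = \prod_{l=1}^{L}\bar{\Theta}'(V_l - V_{th})\Bigl(I + \frac{\partial F_l}{\partial o_{l-1}}\Bigr),
\]
so every layer stacks one more scalar prefactor onto the whole product.

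I would then bound these prefactors. Since $\bar{\Theta}(x)\in(0,1)$, we have $\bar{\Theta}'(x) = \alpha\,\bar{\Theta}(x)(1-\bar{\Theta}(x)) \le \alpha/4$, which attains its maximum $1$ only at $x=0$ under the default $\alpha=4$ and decays exponentially away from threshold. Because an SN fires sparsely, most $V_l$ lie away from $V_{th}$, so a typical factor $\bar{\Theta}'(V_l - V_{th})$ is strictly below one; the product $\prod_l \bar{\Theta}'(\cdot)$ therefore shrinks geometrically in $L$ and the full gradient — shortcut and all — vanishes. I would contrast this with the interblock-free class (SEW/MS/PA), where moving the SN inside the branch gives $\partial o_l/\partial o_{l-1} = I + \bar{\Theta}'(\cdot)\,\partial F_l/\partial o_{l-1}$: here the prefactor attaches only to the branch term, leaving an additive $I$ that survives backpropagation and preserves gradient flow, which is exactly why that class trains deep SNNs well.

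The main obstacle I anticipate is making the ``typical $\bar{\Theta}'<1$'' claim rigorous rather than heuristic, since $\bar{\Theta}'$ can equal $1$ exactly at threshold and a worst-case bound alone does not force contraction. I would address this by invoking the firing-rate and sparsity statistics measured in the experiments so that the \emph{expected} log-prefactor is negative, or by arguing that the BN/initialization underlying Origin/tdBN/BAA concentrates pre-activations near $N(0,V_{th}^2)$ and thus away from $V_{th}$, making the expected multiplicative factor less than one and the deep-network gradient contract in expectation.
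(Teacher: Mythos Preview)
Your argument is correct and lands on the same mechanism as the paper---the interblock SN places a surrogate-derivative factor \emph{outside} the additive identity so that it multiplies the entire block Jacobian, and stacking blocks accumulates these factors multiplicatively. The paper, however, reaches the clean product $\prod_i \bar{\Theta}'(O^{l+i}-V_{th})$ by first imposing the \emph{identity-mapping} idealization (last BN output $\equiv 0$, IF with $V_{th}=1$), which kills the branch contribution entirely and leaves only the prefactors; you instead keep the general form $\prod_l \bar{\Theta}'(V_l-V_{th})\bigl(I+\partial F_l/\partial o_{l-1}\bigr)$ and then argue about the prefactor product alone. That is more faithful to the actual network but leaves the $(I+\partial F_l/\partial o_{l-1})$ terms unbounded---in principle they could offset the shrinking prefactors---which is precisely the hole the paper's identity-mapping assumption is designed to plug. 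Your explicit bound $\bar{\Theta}'\le\alpha/4=1$ under the default $\alpha=4$ is sharper than the paper's trichotomy (it rules out the $+\infty$ branch for this surrogate), and your structural contrast with the SEW/PA class---where the $I$ survives additively---is a useful complement that the paper defers to later sections rather than including in this proof.
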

\begin{proof}
  The proof of {\bf Proposition~\ref{proposition1}} mainly follows~\cite{fang2021deep}.
  For Origin, assuming that the SN will generate a spike
  whenever it receives a spike
  ({\em e.g.} IF model with $V_{th} = 1$) and the
  outputs from the last BN layer are always 0 ({\em i.e.}
  the identity mapping condition
  is met, $O^{l + k}[t]=\cdots=O^{l}[t]$,
  where $O^{l+k}[t]$ denotes the outputs from the
  $(l + k)$-th block at time step $t$),
  the gradients from the $l$-th block to
  the $(l + k)$-th block can be formulated as follows:
  \begin{equation}
      \label{eq4}
      \begin{gathered}
      O^{l + k}[t] = 0 + SN(O^{l + k - 1}[t])
      = \\ 0 + SN(SN(O^{l + k - 2}[t])) = \cdots,
      \\
      \frac{\partial O^{l + k}[t]}{\partial O^l[t]} =
        \prod_{i=1}^{i=k}\bar{\Theta}'(O^{l+i} - V_{th})
          \rightarrow \\ \left\{
              \begin{aligned}
              & +\infty, & & \bar{\Theta}'(O^{l+i} - V_{th}) > 1,\\
              & 1, & & \bar{\Theta}'(O^{l+i} - V_{th}) = 1,\\
              & 0, & & \bar{\Theta}'(O^{l+i} - V_{th}) < 1.
              \end{aligned} \right.
      \end{gathered}
  \end{equation}
  For most surrogate functions, $\bar{\Theta}'(O^{l+i} - V_{th}) = 1$
  is hard to satisfy. Since a surrogate function can be
  seen as a differentiable version of the Heaviside step function,
  when there is no spike, surrogate functions tend to produce small gradients.
  The intrinsic sparsity of SNNs further makes them easier
  to be small, {\em i.e.}
  vanishing gradients are often the case.
  And for tdBN and BAA, the additional BN layer cannot solve such an issue and
  the interblock modules still disrupt gradients and degrade the performance
  of deep SNNs.
\end{proof}

\begin{figure*}[t]
  \centering
  \subfloat[]{
  \label{LIF}
  \centering
  \includegraphics[height=0.75\columnwidth]{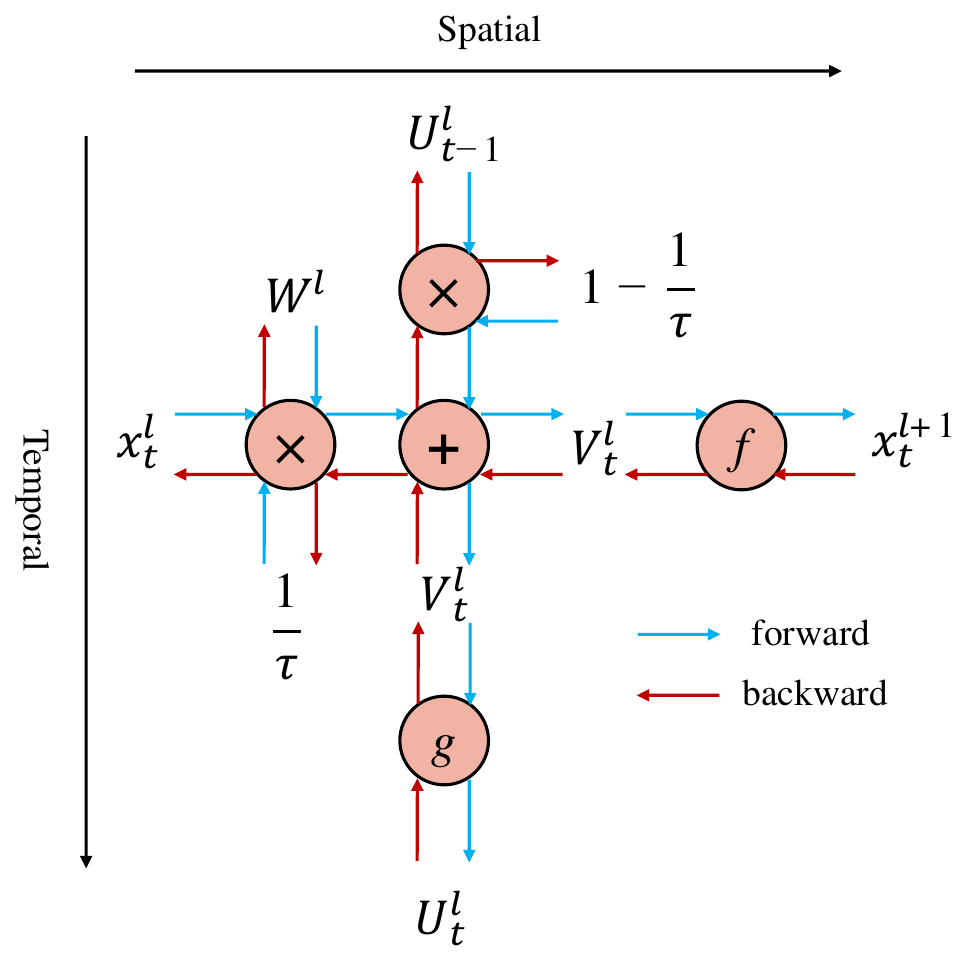}
  }
  \subfloat[]{
  \label{IF}
  \centering
  \includegraphics[height=0.75\columnwidth]{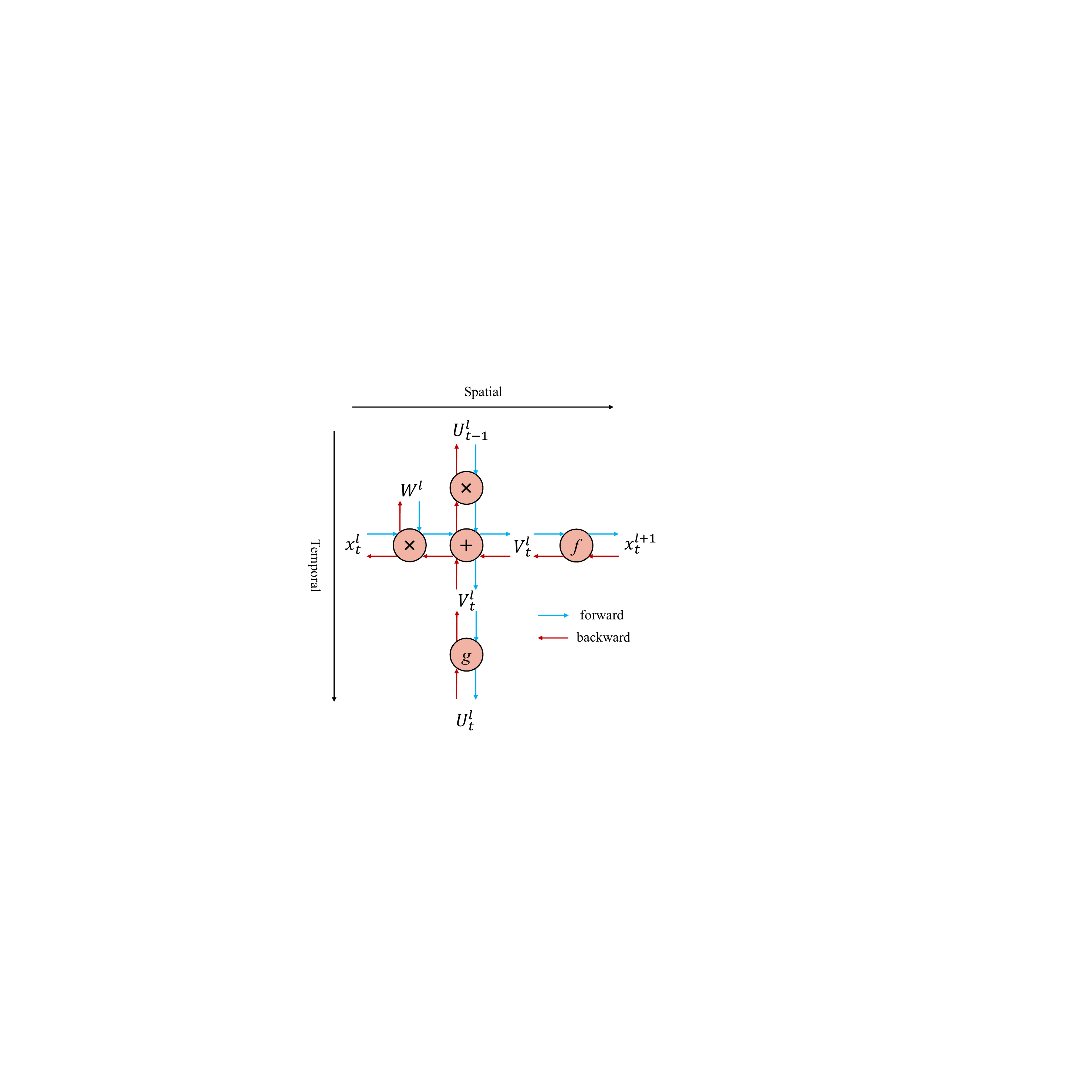}
  }
  \caption{Computational graphs of different neuron models. $f$ and $g$ denote
  the firing function and reset function respectively. (a) LIF. (b) IF.}
  \label{fig2}
\end{figure*}

\begin{figure*}[t]
  \centering
  \subfloat[]{
  \label{fr_res101}
  \centering
  \includegraphics[width=0.66\columnwidth]{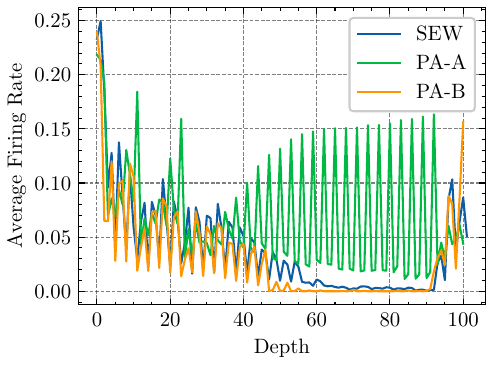}
  }
  \subfloat[]{
  \label{fr_res152}
  \centering
  \includegraphics[width=0.66\columnwidth]{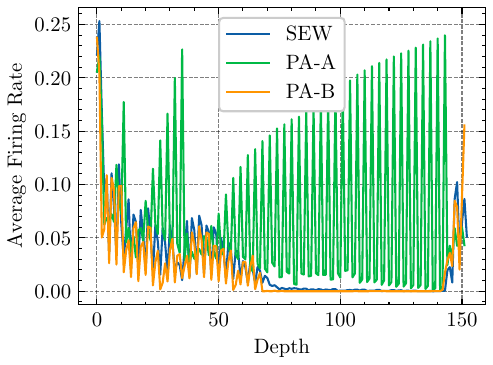}
  }
  \subfloat[]{
          \centering
          \includegraphics[width=0.66\columnwidth]{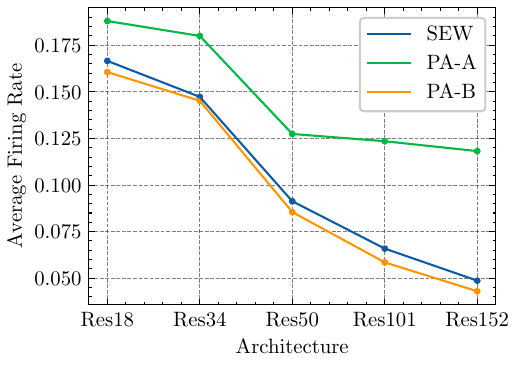}
          \label{fr_avg}
  }
  \caption{Firing rates of SEW, PA-A, and PA-B. (a)-(b) depict the layer-wise firing rates of different
  backbones and (c) demonstrates the overall average firing rates. (a) ResNet-101. (b) ResNet-152. (c) Average firing rates.}
  \label{fig3}
\end{figure*}

SEW, PA-A, and PA-B depicted in Fig.~\ref{fig1} are three improved connections.
The connection of SEW is the same as~\cite{fang2021deep} if we adopt addition as
spike-element-wise operation, which is the best-performing architecture reported in~\cite{fang2021deep}.
We term the latter two connections as PA-A and PA-B
since they are quite similar to
the pre-activation blocks in \cite{he2016identity}.
Additionally, PA-A has a similar topology as MS-ResNet \cite{hu2021advancing},
except that MS-ResNet adopts tdBN. But such a minor modification greatly changes
its spiking pattern and final performance as we will discuss in the following.

\begin{figure*}[t]
  \centering
  \subfloat[]{
  \label{inspiration}
  \centering
  \includegraphics[height=1.0\columnwidth]{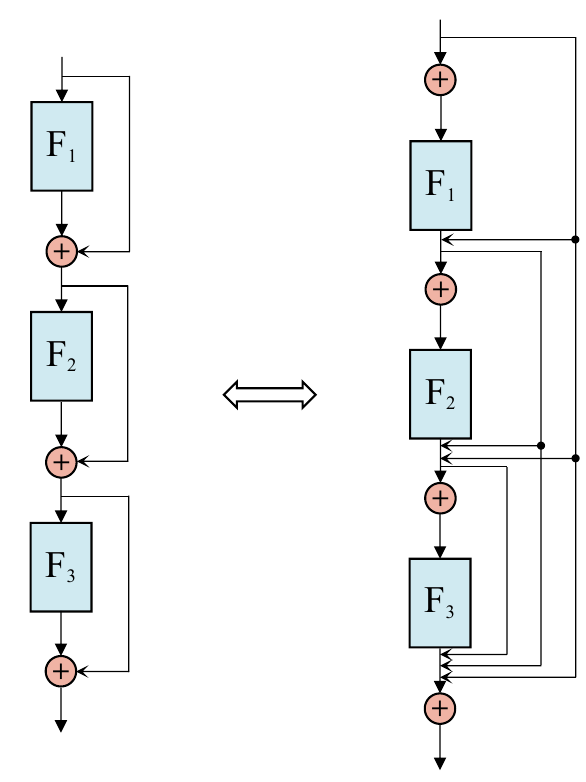}
  }
  \hspace{1in}
  \subfloat[]{
  \label{extension}
  \centering
  \includegraphics[height=1.0\columnwidth]{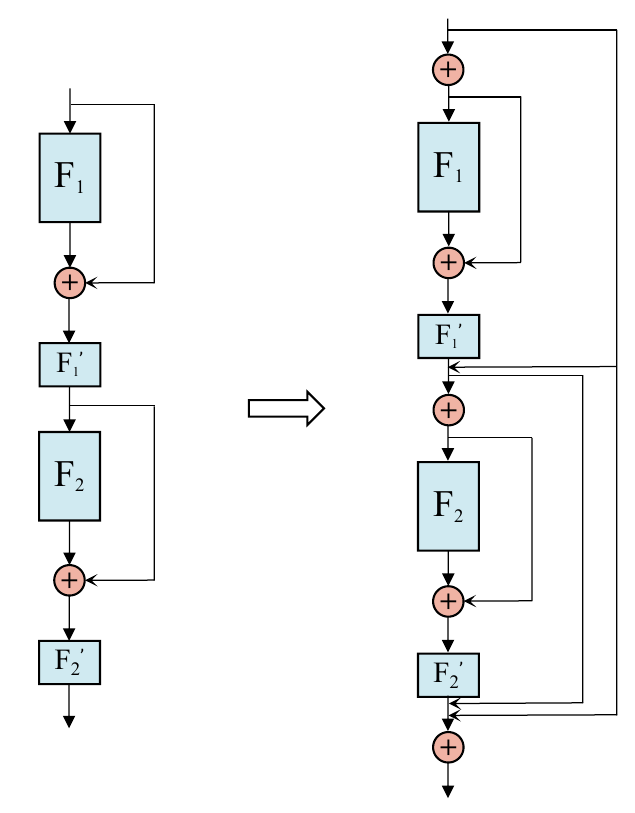}
  }
  \caption{Illustrations of Densely Additive Connection. Note that the structures in (a)
  are equivalent while those in (b) are not. (a) Abstraction. (b) Extension.}
  \label{fig4}
\end{figure*}

\begin{table}[t!]
  \centering
  
  \caption{Comparison between SEW, PA-A, and PA-B. (``PV'', ``FR'', ``Adv.'', and ``Dis.'' denote propagated value, firing rates,
  advantages, and disadvantages respectively.)}
  
  \label{table1}
  \resizebox{1.0\columnwidth}{!}{
      \begin{tabular}{cccc}
      \toprule

      & SEW & PA-A & PA-B \\
      
      \midrule
      PV  & Int & Analog & Analog \\
      FR & Medium & High & Low \\
      \midrule
      \multirow{4}*{Adv.} &  & the most active  & robust to \\ &
        robust to & flow of gradients & overfitting \\ &
        overfitting &  and delicate & and delicate\\ &
         &  perturbations & perturbations\\
      \multirow{2}*{Dis.} & coarse perturbations & overfitting on & sparse \\ &
      and sparse updates & small dataset & updates \\

      \bottomrule
    \end{tabular}
}
\end{table}

For the three improved connections, we will further compare them from two aspects, namely
the propagated values and structural characteristics as shown in Tab.~\ref{table1}, which determine the representational
and learning capability of the networks, respectively.
Firstly, for SEW, the values propagated through blocks are integers, which keep increasing monotonically
as the network deepens since each block only adds binary values to the propagated values.
For PA-A and PA-B, the values propagated through blocks are analog values. As the main idea
of residual learning is to learn small perturbations by each block, analog value is obviously a better choice
since it produces more delicate perturbations.
From this perspective, PA-A and PA-B have
stronger representational capability than SEW.
Besides, the structural characteristics of
SEW, PA-A, and PA-B also play an important role in the learning of
SNNs. Different from ANNs, as SNNs deepen,
the generated spikes become more and more sparse, which hinders the learning of
large-scale SNNs according to {\bf Proposition~\ref{proposition2}}.
But this is not the case for PA-A according to {\bf Proposition~\ref{proposition3}}.

\begin{proposition}
    \label{proposition2}
    The intrinsic sparsity of SNNs tends to vanish the gradients and hinder the learning of SNNs.
\end{proposition}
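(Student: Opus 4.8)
The plan is to follow the backward pass through a stack of spiking neurons and show that sparse firing pins the per-layer surrogate derivative into its small-value regime, so that the depth-wise product of these derivatives collapses.

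First I would isolate the local gradient of a single SN. Because the forward spike is $S[t]=\Theta(V[t]-V_{th})$ while the surrogate replaces $\Theta$ in the backward graph, the neuron contributes the factor $\bar{\Theta}'(V[t]-V_{th})=\alpha\,\bar{\Theta}(V[t]-V_{th})\,(1-\bar{\Theta}(V[t]-V_{th}))$. This factor peaks at $V[t]=V_{th}$ with value $\alpha/4$ and decays to $0$ as $V[t]$ moves away from $V_{th}$ in either direction; in particular it vanishes in the deeply sub-threshold limit $V[t]\ll V_{th}$.

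Next I would link sparsity to the argument of this factor. A neuron is silent at time $t$ exactly when $V[t]<V_{th}$, i.e. $V[t]-V_{th}<0$, so high sparsity places the bulk of the membrane-potential mass on the negative side of the surrogate derivative, where $\bar{\Theta}(V[t]-V_{th})\to0$ and hence $\bar{\Theta}'(V[t]-V_{th})\to0$. Working in expectation over the neurons in a layer, the mean surrogate derivative is thus governed by how much probability mass sits near $V_{th}$; as the firing rate drops this mass thins and the typical local factor falls below $1$. Then, exactly as in the product formula behind {\bf Proposition~\ref{proposition1}}, the gradient relayed from a deep layer back to a shallow one is a product $\prod_i\bar{\Theta}'(V^i-V_{th})$ of such factors, so once sparsity forces each factor below unity this product decays geometrically in the number of spiking layers, the gradient reaching the early layers is exponentially small, and their learning stalls---precisely the vanishing-gradient behavior claimed.

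The hard part will be turning ``sparse firing'' into ``small surrogate derivative'' without hand-waving: the firing rate only bounds the fraction of neurons with $V\ge V_{th}$ and says nothing about the sub-threshold shape, so a distribution could be sparse while piling its mass just below $V_{th}$---exactly where $\bar{\Theta}'$ is largest---defeating any naive ``sparse implies small gradient'' claim. To close this gap I would impose a concentration assumption, namely that increasing sparsity drives the sub-threshold mass well below $V_{th}$ rather than against it; under such a prior on $V[t]$ the mean local factor is genuinely controlled by the firing rate, and combining this with the depth-wise product delivers the proposition at the same heuristic level of rigor as {\bf Proposition~\ref{proposition1}}.
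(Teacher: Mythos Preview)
Your argument is correct at the same heuristic level as the paper's, and the core mechanism you identify---repeated multiplication by the surrogate-derivative factor $\bar{\Theta}'(V-V_{th})$, which sparsity pushes below unity---is exactly what the paper uses. The difference is in the scaffolding around that factor. The paper writes out the full BPTT decomposition $\partial L/\partial W^l = L_{\text{spatial}} + L_{\text{temporal}}$ and then unrolls both the through-layer recursion $\partial L/\partial S^l_t$ and the through-time recursion $\partial L/\partial U^l_{t-1}$, showing that \emph{every} path in the spatio-temporal graph picks up copies of $\bar{\Theta}'(V^l_t-V_{th})$; your version treats only the depth-wise product and borrows the structure from Proposition~\ref{proposition1}, so it silently drops the temporal branch. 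Conversely, you are more careful than the paper at the one genuinely delicate step: the paper simply asserts that sparsity makes $\bar{\Theta}'$ small, whereas you correctly flag that low firing rate alone does not control the sub-threshold shape and that a concentration assumption on $V[t]$ is needed to turn ``sparse'' into ``small derivative.'' Neither version is a rigorous theorem; the paper buys coverage of the temporal axis, your version buys honesty about the missing hypothesis.
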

\begin{proof}
  Assuming that we are updating the synapse weight matrix $W^l$
  of the SNs in $l$-th layer, we have:
  \begin{equation}
      \label{eq5}
      \begin{gathered}
      \begin{aligned}
          \frac{\partial L}{\partial W^l} & = L_{spatial} + L_{temporal},
      \end{aligned}
      \end{gathered}
  \end{equation}
  \begin{equation}
    \label{eq7}
    \begin{gathered}
    \begin{aligned}
      L_{spatial} = \frac{\partial L}{\partial S^{l + 1}_t}
        \cdot \frac{\partial S^{l + 1}_t}{\partial V^l_t}
        \cdot \frac{\partial V^l_t}{\partial W^l},
    \end{aligned}
    \end{gathered}
\end{equation}
\begin{equation}
  \label{eq8}
  \begin{gathered}
  \begin{aligned}
      L_{temporal} = \frac{\partial L}{\partial U^l_t}
      \cdot \frac{\partial U^l_t}{\partial V^l_t}
      \cdot \frac{\partial V^l_t}{\partial W^l},
  \end{aligned}
  \end{gathered}
\end{equation}
\begin{equation}
  \label{eq9}
  \begin{gathered}
  \begin{aligned}
    \frac{\partial S^{l+1}_t}{\partial V^l_t} = \bar{\Theta}'(V^l_t - V_{th}),
  \end{aligned}
  \end{gathered}
\end{equation}
  where $L_{spatial}$, $L_{temporal}$, and $L$ denote the loss propagating through layers, the loss propagating
  through time and the final loss, respectively. For clarity, we illustrate the computational
  graph in Fig.~\ref{fig2}. We adopt $\bar{T}$ and $\bar{L}$ to denote the total number of time steps and layers.
  Note that we assume $0<t\leq\bar{T}$ and $0<l<\bar{L}$.
  Propagating through time and layers, we have:
  \begin{equation}
      \label{eq6}
      \begin{gathered}
        \frac{\partial L}{\partial U^l_{t-1}} = 
        \left\{
          \begin{aligned}
              \frac{\partial L}{\partial S^{l + 1}_{t}}
              \cdot \frac{\partial S^{l+1}_{t}}{\partial V^l_{t}}
              \cdot \frac{\partial V^l_{t}}{\partial U^l_{t-1}}
              \\ + \frac{\partial L}{\partial U^l_{t}}
              \cdot \frac{\partial U^l_{t}}{\partial V^l_{t}}
              \cdot \frac{\partial V^l_{t}}{\partial U^l_{t-1}}, & & 0 < t < \bar{T},\\
              \frac{\partial L}{\partial S^{l + 1}_{t}}
              \cdot \frac{\partial S^{l+1}_{t}}{\partial V^l_{t}}
              \cdot \frac{\partial V^l_{t}}{\partial U^l_{t-1}}, & & t = \bar{T},
          \end{aligned}\right.
      \end{gathered}
  \end{equation}
  \begin{equation}
    \label{eq10}
    \begin{gathered}
      \frac{\partial L}{\partial S^{l}_t} = 
      \left\{
        \begin{aligned}
            \frac{\partial L}{\partial S^{l + 1}_{t}}
            \cdot \frac{\partial S^{l+1}_{t}}{\partial V^l_{t}}
            \cdot \frac{\partial V^l_{t}}{\partial S^l_t}
            \\ + \frac{\partial L}{\partial U^l_{t}}
            \cdot \frac{\partial U^l_{t}}{\partial V^l_{t}}
            \cdot \frac{\partial V^l_{t}}{\partial S^l_t}, & & 0 < t < \bar{T},\\
            \frac{\partial L}{\partial S^{l + 1}_{t}}
            \cdot \frac{\partial S^{l+1}_{t}}{\partial V^l_{t}}
            \cdot \frac{\partial V^l_{t}}{\partial S^l_t}, & & t = \bar{T}.
        \end{aligned}\right.
    \end{gathered}
\end{equation}
  Substituting Eq.~\eqref{eq6} and Eq.~\eqref{eq10} into Eq.~\eqref{eq7} and Eq.~\eqref{eq8}
  iteratively, every item is multiplied by Eq.~\eqref{eq9}.
  When the gradients propagate through layers, Eq.~\eqref{eq7} and Eq.~\eqref{eq8} will be multiplied
  by Eq.~\eqref{eq9} for multiple times. And the intrinsic sparsity in SNNs makes Eq.~\eqref{eq9} tend to
  produce small values.
  Thus, Eq.~\eqref{eq5} will tend to converge to 0.
  That is to say, although the sparsity makes SNNs energy-efficient, it also
  vanishes the gradients and hinders the learning of SNNs.
\end{proof}
\begin{proposition}
    \label{proposition3}
    For PA-A, within a stage, the firing rates of the first SN in a deeper block tend to
    be higher.
\end{proposition}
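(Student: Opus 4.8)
The plan is to track how the analog residual stream accumulates across blocks of a single stage and to show that this accumulation inflates the drive of the first SN of each block, thereby raising its firing rate with depth. First I would fix a stage whose blocks all share the same dimension, so that every residual connection within it is an identity map. Writing $x^{l}[t]$ for the analog value entering the $l$-th block at time step $t$ and $\mathcal{F}_l$ for its residual branch (which in PA-A is entered through the first SN and terminated by a BN layer), the pre-activation topology gives the recursion
\begin{equation}
  x^{l+1}[t] = x^{l}[t] + \mathcal{F}_l\bigl(x^{l}[t]\bigr),
\end{equation}
which unrolls, for $l_0$ the first block of the stage, to
\begin{equation}
  x^{l}[t] = x^{l_0}[t] + \sum_{i=l_0}^{l-1} \mathcal{F}_i\bigl(x^{i}[t]\bigr).
\end{equation}
The key structural point of PA-A is that the first SN of block $l$ receives $x^{l}[t]$ directly, with no normalization renormalizing the stream in front of it, so its input is exactly this growing sum.

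Next I would analyze the statistics of $x^{l}[t]$. Because each branch $\mathcal{F}_i$ ends in a BN layer, its output is (to first order) zero-mean with a variance set by the BN scale; treating the summands as approximately uncorrelated, the stream variance
\begin{equation}
  \mathrm{Var}\bigl(x^{l}[t]\bigr) \approx \mathrm{Var}\bigl(x^{l_0}[t]\bigr) + \sum_{i=l_0}^{l-1} \mathrm{Var}\bigl(\mathcal{F}_i(x^{i}[t])\bigr)
\end{equation}
is a sum of nonnegative terms and hence increases monotonically with $l$. Thus the signal presented to the first SN becomes progressively more spread out in deeper blocks while its mean stays near zero.

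Then I would convert this spreading into a firing-rate statement. Modeling $x^{l}[t]$ as approximately $N(0,\sigma_l^2)$ with $\sigma_l$ increasing in $l$, and recalling that the first SN emits a spike once its membrane potential reaches the fixed threshold $V_{th}>0$, the per-step firing probability behaves like $1-\Phi(V_{th}/\sigma_l)$, which is strictly increasing in $\sigma_l$. Hence $r_l$, the firing rate of the first SN of block $l$, grows with $l$; for the IF dynamics of Eq.~\eqref{eq2} the temporal integration of a larger-variance drive only reinforces the trend. Averaging over the population and over time steps then yields the claim that, within a stage, the first SN of a deeper block tends to fire more.

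The hard part will be the variance-accumulation step: it rests on the branch outputs being near zero-mean (which I get from the terminal BN) and roughly uncorrelated across blocks, and on the structural fact that PA-A places no normalization directly before the first SN, since otherwise the stream would be renormalized and the trend erased. I would also emphasize that restricting to a single stage is essential, because a downsampling or dimension-changing block breaks the identity accumulation and effectively restarts the sum; the qualifier \emph{tend to} in the statement is exactly what absorbs the Gaussian and independence approximations used above.
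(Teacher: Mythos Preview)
Your proposal is correct and follows essentially the same line as the paper: both argue that the analog residual stream in PA-A is a sum of BN-terminated branch outputs, so the input to the first SN of the $l$-th block is approximately Gaussian with variance $\sum_{j} \lambda_j^2$ that grows with depth, and this growing spread raises the firing rate. The only minor difference is that the paper also retains the BN bias terms, writing the mean as $\sum_{j}\beta_j$ rather than assuming it stays near zero, but the variance-accumulation mechanism driving the conclusion is identical.
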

\begin{proof}
Since each block ends with a BN layer,
the first SN in the $l$-th block from stage-$i$ receives
the input current that can be approximated
by $N(\sum_{j=0}^{l-1}\beta_j, \sum_{j=0}^{l-1}\lambda_j^2)$,
where $\beta_j$ and $\lambda_j$ are the learnable bias and
scaling factor of the last BN layer in the $j$-th block.
As the network goes deeper, the growing value of
$\sum_{j=0}^{l-1}\lambda_j^2$ endows these SNs with higher firing rates.
\end{proof}

\begin{listing}[t!]%
  \caption{Densely Additive Connection}%
  \label{lst:listing}%
  \begin{lstlisting}[language=Python]
class DenseBlock(nn.Module):
def __init__(self, layers):
  super(DenseBlock, self).__init__()
  for i, l in enumerate(layers):
    self.add_module('layer%d' % (i + 1), l)
  self.layers = layers

def forward(self, x):
  features = [self.layers[0](x)]
  for layer in self.layers[1:]:
    new_feature = layer(sum(features[:]))
    features.append(new_feature)

  return sum(features[:])
  \end{lstlisting}
\end{listing}

In Fig.~\ref{fig3}, we compare the firing rates of SEW, PA-A, and PA-B.
We can observe that the behavior of PA-A is consistent with
{\bf Proposition~\ref{proposition3}}. The firing rates of its first SN in a residual block
keep growing as the network goes deeper until it reaches the starting downsample block of the next stage
and its overall firing rate is higher than others.
For SEW, since the values propagated increase
monotonically, it has higher firing rates than PA-B, but
such an effect is limited by the BN layer before SN.
From {\bf Proposition~\ref{proposition2}},
we can know that higher firing rates will facilitate the flow
of gradients. That is to say, PA-A has the most active flow of gradients.
But it also makes PA-A suffer
from overfitting when not given enough data
and obtain poor results on small datasets.
And for SEW and PA-B, when the dataset is small,
sparse updates act as the role of regularization
and enable them to obtain better performance.
And the more delicate perturbations of PA-B further endow it with
better performance than SEW on small datasets.
However, when it comes to large-scale dataset,
facilitating the flow of gradients becomes the key,
instead of avoiding overfitting, and PA-A manages to obtain
substantially higher accuracy as we will demonstrate in Sec.~\ref{ablation}.
Since MS-ResNet adopts tdBN,
{\bf Proposition~\ref{proposition3}} is not available for it and ultimately it
obtains inferior performance to PA-A. The merits and demerits of these three
connections are summarized in Tab.~\ref{table1}.

\subsection{Densely Additive Connection}
\label{DA}
The performance gap between the connections with/without interblock SNs is so huge that limits the
applications of the connections with interblock SNs. But the diversity of structures is
important for the development of SNNs.
Thus, it's necessary to find a route to improve the performance
of the connections with interblock SNs.

The key problem in these connections is that the information flow is blocked by interblock SNs.
For the connections without interblock SNs,
the perturbations produced by each block can reach subsequent
blocks directly. To be specific, we denote the $i$-th residual block as $F_i$, the input to $F_i$ as $\dot{F_i}$, and the output of
$F_i$ as $\bar{F_i}$. Assuming that there are four blocks $F_0$, $F_1$, $F_2$, and $F_3$, for connections
without interblock modules, we have:
\begin{equation}
    \label{eq:1}
    \begin{aligned}
        \dot{F_1} = \bar{F_0},\qquad \dot{F_2} = \bar{F_0} + \bar{F_1},\qquad \dot{F_3} = \bar{F_0} + \bar{F_1} + \bar{F_2}.
    \end{aligned}
\end{equation}
The input to each block is the sum of the output from previous blocks.
That is to say, the output from each block can reach the subsequent
blocks without information loss and so as the gradients.
Following Eq.~\eqref{eq:1},
we can abstract these connections
into densely additive (DA) connection,
as illustrated in Fig.~\subref*{inspiration}
(note that $F_0$ is not included in the figure).

According to {\bf Proposition~\ref{proposition1}},
the gradient issue in these connections is caused by the interblock modules.
When we extend the DA connection to these connections, we manage to
construct a cleaner path for gradients to flow and alleviate
the negative effect brought by interblock modules.
Specifically, assuming that there are four blocks, for the $i$-th block, we denote the sub-block
between identity operation as $F_i$ and
the sub-block outside identity operation as $F_i'$.
As illustrated in Fig.~\subref*{extension}
(note that $F_0$ and $F_3$ are not included in the figure), after transformation,
for the input to each block, we have:
\begin{equation}
    \begin{gathered}
    \dot{F_1} = \bar{F_0'},\quad \dot{F_2} = \bar{F_1'},\quad \dot{F_3} = \bar{F_2'} \quad\Rightarrow \\
    \quad \dot{F_1} = \bar{F_0'},\quad \dot{F_2} = \bar{F_0'} + \bar{F_1'},\quad \dot{F_3} = \bar{F_0'} + \bar{F_1'} + \bar{F_2'}.
    \end{gathered}
\end{equation}
For clarity, we present torch-like Python code in Listing~\ref{lst:listing}.

\begin{proposition}
    \label{proposition4}
    Densely additive connection provides a lower bound for the gradients of the network.
\end{proposition}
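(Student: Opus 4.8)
The plan is to mirror the backward analysis used in the proof of \textbf{Proposition~\ref{proposition1}}, but for the DA topology of Eq.~\eqref{eq:1}, and to split the gradient that reaches any early block into a \emph{direct} part carried by the dense additive (identity) edges and an \emph{indirect} part routed through the interblock maps. The lower bound will follow because the direct part is transmitted with coefficient exactly $1$ and therefore never accumulates the surrogate-gradient factors $\bar{\Theta}'(V^l_t-V_{th})$ that \textbf{Proposition~\ref{proposition2}} shows are driven toward $0$ by the intrinsic sparsity of SNNs.

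First I would fix the block setting of Eq.~\eqref{eq:1}, denote the output of the dense block by $\mathcal{O}=\sum_{i}\bar{F_i}$, and write each block as $\bar{F_j}=F_j(\dot{F_j})$ with $\dot{F_j}=\sum_{k<j}\bar{F_k}$. The central observation is that every output $\bar{F_l}$ feeds each later input $\dot{F_j}$ ($j>l$) and the output sum $\mathcal{O}$ with coefficient $1$, so the backward recursion reads
\begin{equation}
    \frac{\partial L}{\partial \bar{F_l}}=\frac{\partial L}{\partial \mathcal{O}}+\sum_{j>l}\frac{\partial L}{\partial \bar{F_j}}\cdot\frac{\partial \bar{F_j}}{\partial \dot{F_j}},
\end{equation}
where each block Jacobian $\partial\bar{F_j}/\partial\dot{F_j}$ carries the factor $\bar{\Theta}'(V^l_t-V_{th})$ of Eq.~\eqref{eq9}. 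I would then read off $\partial L/\partial\mathcal{O}$ as a fixed term that is independent of the surrogate gradients. By \textbf{Proposition~\ref{proposition2}}, sparsity pushes every $\partial\bar{F_j}/\partial\dot{F_j}$ toward $0$, so in the sparse regime the indirect sum becomes negligible and $\partial L/\partial\bar{F_l}$ is pinned near the non-vanishing floor $\partial L/\partial\mathcal{O}$. Contrasting this with Eq.~\eqref{eq4}, where the absence of an identity path collapses the whole gradient to $\prod\bar{\Theta}'$, makes the claimed improvement explicit: DA guarantees a depth-independent lower bound $\partial L/\partial\mathcal{O}$ that the vanishing mechanism cannot remove.

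The hard part will be excluding destructive cancellation between the direct term $\partial L/\partial\mathcal{O}$ and the indirect sum, whose entries need not be sign-aligned with it. I would control this by working in the worst-case sparsity regime of \textbf{Proposition~\ref{proposition2}}: as $\bar{\Theta}'\to0$ the indirect terms vanish uniformly, so for sufficiently sparse firing the residual sum is dominated by $\partial L/\partial\mathcal{O}$ and cannot cancel it; alternatively, imposing an identity-mapping condition analogous to the one used in \textbf{Proposition~\ref{proposition1}} turns the indirect terms into a controlled perturbation rather than an adversarial one. A secondary subtlety is that the dense edges act across blocks, not across time, so the recursion only lower-bounds the spatial component $L_{spatial}$ of Eq.~\eqref{eq7}, leaving the temporal path of Eq.~\eqref{eq6} unchanged; I would flag this explicitly so that the bound is stated for the layer-wise (spatial) flow of gradients that \textbf{Proposition~\ref{proposition1}} and \textbf{Proposition~\ref{proposition2}} target.
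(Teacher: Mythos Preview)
Your decomposition into a direct identity path plus indirect block-Jacobian paths is the right picture, but the paper instantiates it differently, and that choice is what makes the bound clean. The paper argues for DANet-C (the extension with an interblock SN), not for the pure DA of Eq.~\eqref{eq:1}; under the identity-mapping assumption that the last BN weight in each block is initialised to $0$, the residual branch contributes nothing and one has $O^{l+k}[t]=\mathrm{SN}\bigl(\sum_{i=1}^{l+k-1}O^{i}[t]\bigr)$. Unrolling the chain rule then gives
\[
\frac{\partial O^{l+k}[t]}{\partial O^{l}[t]}
=\sum_{i=1}^{k}\prod_{j=1}^{i}\bar{\Theta}'\bigl(O^{l+k-j+1}-V_{th}\bigr)
> \bar{\Theta}'\bigl(O^{l+k}-V_{th}\bigr),
\]
because every factor $\bar{\Theta}'(\cdot)$ is strictly positive, so the sum of products dominates its $i{=}1$ term. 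The lower bound is thus a \emph{single} surrogate factor (the interblock SN sits on even the shortest DA path), not the surrogate-free $\partial L/\partial\mathcal{O}$ you obtain, and the inequality is exact rather than asymptotic.

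The practical difference is that your cancellation concern is real in your setup---your indirect terms carry full block Jacobians $\partial\bar{F}_j/\partial\dot{F}_j$ with signed weight entries---whereas the paper sidesteps it entirely: by fixing BN${=}0$ and targeting DANet-C, every path contribution reduces to a product of positive $\bar{\Theta}'$'s, so no sparse-limit or sign argument is needed. Your ``identity-mapping'' alternative is essentially what the paper invokes up front; if you adopt it, your argument collapses to theirs. What your more general route buys is that it applies verbatim to DANet-A/B (where the floor really is $\partial L/\partial\mathcal{O}$ with coefficient $1$), but for DANet-C/D---which is the case Proposition~\ref{proposition4} is meant to justify---the paper's positivity shortcut is both simpler and yields a strict, non-asymptotic inequality.
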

\begin{proof}
  For DANet-C, assuming that we initialize the weight of
  the last BN layer in each block with 0, we have:
  \begin{equation}
      \label{eq11}
      \begin{gathered}
          O^{l + k}[t] = SN(\sum_{i=1}^{l+k-1}O^{i}[t]) = \\
          SN(\sum_{i=1}^{l+k-2}O^{i}[t] + SN(\sum_{i=1}^{l+k-2}O^{i}[t])) = \cdots,
          \\
          \frac{\partial O^{l + k}[t]}{\partial O^l[t]} =
           \sum_{i=1}^{k}
           \prod_{j=1}^{i}\bar{\Theta}'(O^{l+k-j+1} - V_{th}) \\
           > \bar{\Theta}'(O^{l+k} - V_{th}).
          \end{gathered}
  \end{equation}
  That is to say, with DA connection, we can find the shortest path for gradients to
  flow and avoid accumulative multiplication, which vanishes gradients.
  \end{proof}

We extend the DA connection to Origin and BAA and term the resultant
connections as DANet-C and DANet-D, respectively.
As PA-A and PA-B can also be abstracted as DANet,
we term them DANet-A and DANet-B respectively.
Since the extension to tdBN will introduce a lot
of additional BN layers and BAA should have comparable performance to tdBN,
we do not apply DA connection to tdBN.

\section{Experiments}
\label{exp}
\subsection{Experimental Setup}
\label{setup}
\subsubsection{Dataset}
We empirically verify our analysis on ImageNet \cite{deng2009imagenet},
which is the most widely used
benchmark for evaluating large-scale networks.
ImageNet contains 1.28 million training images and 50000 validating images, consisting of 1000 classes.
Since we are studying
large-scale SNNs, we do not adopt other smaller static or neuromorphic
datasets (e.g. CIFAR10~\cite{krizhevsky2010cifar} or DVS-CIFAR10~\cite{li2017cifar10}). In order to evaluate performance on the datasets at different scales
and keep the results consistent and comparable, we randomly select some classes from ImageNet
to form new datasets. Specifically, we randomly select 5, 20, and 100 classes and
term the corresponding datasets as ``ImageNet-5'', ``ImageNet-20'', and
``ImageNet-100'' respectively. For clarity, the original ImageNet is termed
``ImageNet-1k''.

\begin{figure*}[t!]
  \centering
  \subfloat[]{
  \label{w_imagenet5}
  \centering
  \includegraphics[width=0.66\columnwidth]{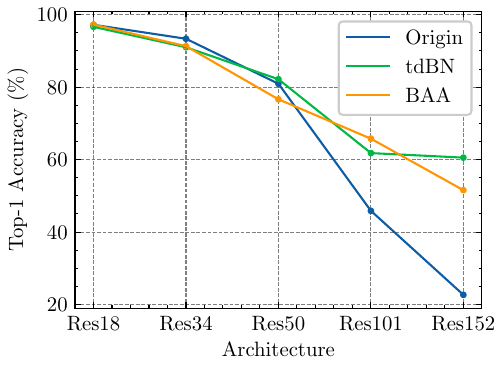}
  }
  \subfloat[]{
  \label{w_imagenet20}
  \centering
  \includegraphics[width=0.66\columnwidth]{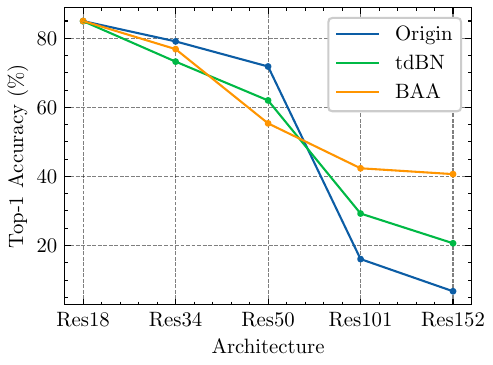}
  }
  \subfloat[]{
  \label{w_imagenet100}
  \centering
  \includegraphics[width=0.66\columnwidth]{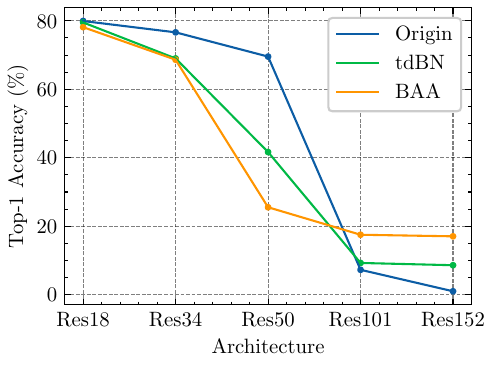}
  }
  \caption{Performance Comparison between Origin, tdBN, and BAA. (a) ImageNet-5. (b) ImageNet-20. (c) ImageNet-100.}
  \label{fig_w}
\end{figure*}

\begin{figure*}[t!]
    \centering
    \subfloat[]{
    \label{wo_imagenet5}
    \centering
    \includegraphics[width=0.66\columnwidth]{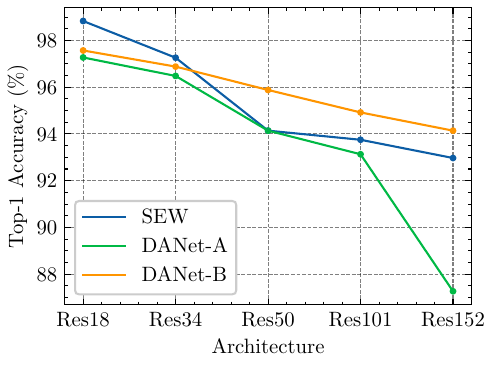}
    }
    \subfloat[]{
    \label{wo_imagenet20}
    \centering
    \includegraphics[width=0.66\columnwidth]{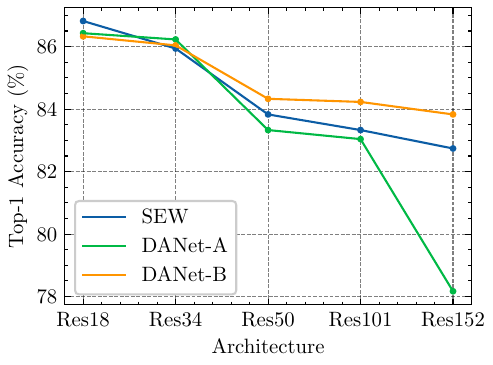}
    }

    \subfloat[]{
    \label{wo_imagenet100}
    \centering
    \includegraphics[width=0.66\columnwidth]{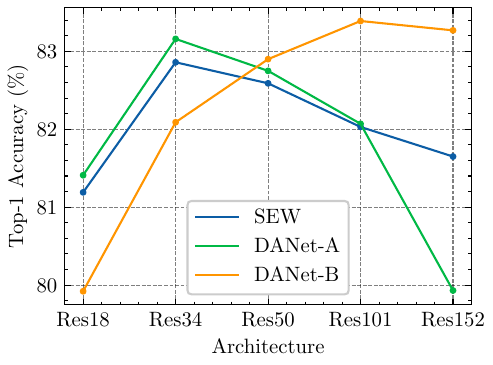}
    }
    \subfloat[]{
    \label{wo_imagenet1000}
    \centering
    \includegraphics[width=0.66\columnwidth]{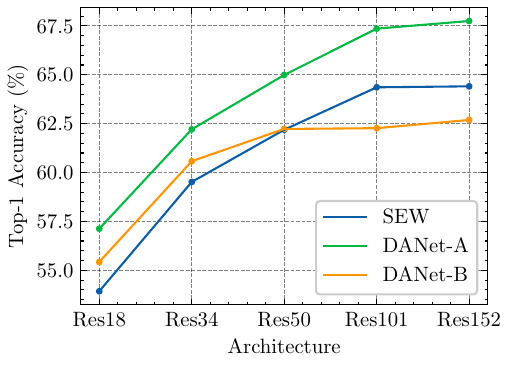}
    }
    \caption{Performance Comparison between SEW, DANet-A, and DANet-B. (a) ImageNet-5. (b) ImageNet-20. (c) ImageNet-100. (d) ImageNet-1k.}
    \label{fig_wo}
\end{figure*}

\subsubsection{ImageNet-1k}
The integrate and fire (IF) model is our default neuron model in the following experiments.
All the experiments are conducted with 8 Nvidia RTX 3090 GPUs using SpikingJelly~\cite{SpikingJelly}, which is based on
Pytorch~\cite{paszke2019pytorch}, with
mixed precision training \cite{micikevicius2018mixed} and sync-BN \cite{zhang2018context}.
We first randomly crop the images to
$224 \times 224$ and horizontally flip the images with a possibility of 0.5.
Auto-Augment \cite{Cubuk_2019_CVPR} and label-smoothing \cite{szegedy2016rethinking}
regularization are adopted for further augmentation.
We train our models using SGD with a momentum of 0.9.
For ablation studies, we warm up for 5 epochs with
a total of 50 epochs. The
learning rate is divided by 10 at epoch \{30, 40, 45\}. 
For performance comparison with state-of-the-art (SOTA) SNNs,
we warm up for 15 epochs with a total of 120 epochs and divide the learning rate by 10
at epoch \{60, 105, 115\}. Both of the above training scripts adopt 1 time step for accelerating training.
When the simulation time step is 4, we adopt the corresponding model trained with 1 time step as initialization,
train the models for another 45 epochs with 10 epochs of warming up, and divide the
learning rate by 10 at epoch \{30, 40\}.
The initial learning rate and weight decay are set to
0.1/0.02 per 256 batch size and 0.0001/0.00001, respectively, for the models with 1/4 simulation time step(s).
Since training cost is linearly related to the simulation time steps, such a two-phase training
method can effectively save training cost while obtaining considerable results.
Compared to SOTA SNNs
\cite{zheng2021going,fang2021deep} (e.g. 300 epochs),
we adopt relatively small epochs, which will result in suboptimal accuracy,
but we still manage to obtain satisfying results.

\subsubsection{ImageNet-5, ImageNet-20 and ImageNet-100}
On these subsets of ImageNet-1k, we adopt the same data augmentation strategy and optimizer
configuration as on ImageNet-1k.
All experiments are
conducted with a simulation time steps of 4 and an initial learning rate of 0.1 per 256 batch size
on 1 Nvidia RTX 3090 GPU.
For ImageNet-5 and ImageNet-20, we warm up for 5 epochs with a total of 100 epochs. And for
ImageNet-100, we warm up for 5 epochs with a total of 50 epochs. 
A cosine learning rate decay scheduler \cite{loshchilov2017sgdr} is adopted for all experiments.

Every experiment is conducted for 3 trials and the averaged top-1 accuracy is reported.
Training logs, pre-trained models, and code will be publicly available.

\subsection{Ablation Study of Residual Connection}
\label{ablation}
As shown in Fig.~\ref{fig_w} and Fig.~\ref{fig_wo},
we adopt five architectures to evaluate
the performance of different residual connections.
We can observe that, as the networks deepen, 
the performance of Origin, tdBN, and BAA consistently
decreases on ImageNet-5, ImageNet-20, and ImageNet-100.
Among them, the degradation of Origin is the worst, while
BAA and tdBN have a similar effect on stabilizing training
as we analyzed in Sec.~\ref{method:ImF}.
But compared to the other connections,
they suffer from intolerable degradation as the networks deepen.

For SEW, DANet-A, and DANet-B,
the results are also consistent with our analysis in Sec.~\ref{method:ImF}.
We can observe that when the datasets are small,
DANet-A performs much worse than the others. The
active flow of gradients makes DANet-A suffer from
overfitting when not given enough data.
As the datasets are getting larger,
the performance gap between DANet-A and the other connections
is becoming smaller and smaller,
and ultimately DANet-A becomes the best-performing connection.
For SEW and DANet-B, sparse updates make them robust to
overfitting and obtain better performance on small datasets.
And the more fine-grained perturbations of DANet-B endow it with
stronger representational capability and
better performance than SEW.
But on ImageNet-1k,
the sparsity hinders the flow of gradients and degrades their performance.
The experimental results are consistent
with Tab.~\ref{table1} and Fig.~\ref{fig3}, verifying our analysis.

\begin{figure*}[t!]
  \centering
  \subfloat[]{
  \label{da_imagenet5}
  \centering
  \includegraphics[width=0.66\columnwidth]{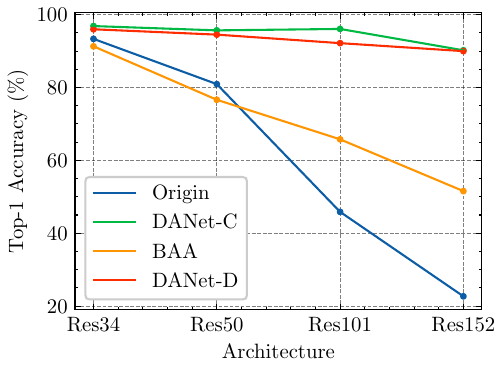}
  }
  \subfloat[]{
  \label{da_imagenet20}
  \centering
  \includegraphics[width=0.66\columnwidth]{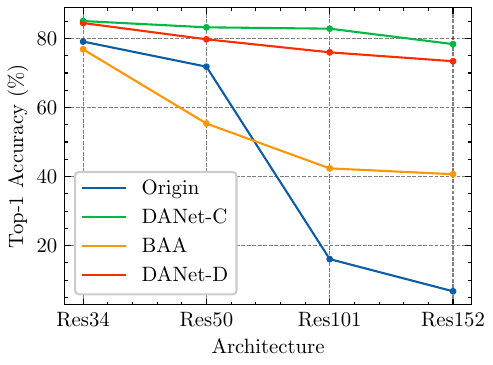}
  }
  \subfloat[]{
  \label{da_imagenet100}
  \centering
  \includegraphics[width=0.66\columnwidth]{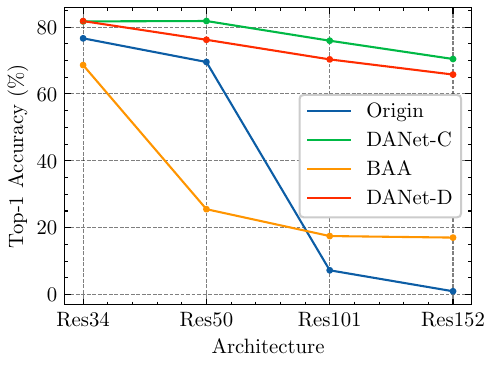}
  }
  \caption{Performance Comparison between Origin, DANet-C, BAA, and DANet-D on the subsets of ImageNet-1k. (a) ImageNet-5. (b) ImageNet-20. (c) ImageNet-100.}
  \label{fig_DA}
\end{figure*}

\begin{table}[t]
    \centering
        
    \caption{Performance Comparison between Origin, DANet-C, BAA, and DANet-D on ImageNet-1k.
    (``-'' means that such experiments are not conducted.)}
    \label{table2}
    \resizebox{0.9\columnwidth}{!}{
        \begin{tabular}{ccccccc}
        \toprule
    
        \multirow{2}{*}{\bf Architecture} & \multicolumn{4}{c}{\bf ImageNet-1k}\\
            
        & Origin & DANet-C & tdBN & DANet-D \\

        \midrule
        ResNet-34  & 61.86 & {\bf 68.26} & 63.72 & {\bf 65.50} \\
        ResNet-50  & 57.66 & {\bf 70.90} & 64.88 & - \\

        \bottomrule
        \end{tabular}
    }
\end{table}

\subsection{Ablation Study of Densely Additive Connection}
\label{ablation:DA}
As demonstrated in Fig.~\ref{fig_DA}, we adopt four architectures (without ResNet-18) to
verify the effectiveness of DA connection since every stage in ResNet-18 only has two blocks, {\em i.e.} the networks
with/without DA connection are the same.
On the subsets of ImageNet-1k,
DA connection greatly improves
performance and effectively alleviates degradation.
In Tab.~\ref{table2}, we compare DANet with corresponding architectures on ImageNet-1k.
Note that the comparison on ImageNet-1k is unfair for
DANet. For Origin, we adopt the results from~\cite{fang2021deep},
which are obtained with 300 epochs. And for tdBN, we also
adopt the results from~\cite{zheng2021going},
which are obtained with 6 time steps
and more training epochs.
We adopt the training recipe
used for our best-performing DANet-A, which is much cheaper.
Nevertheless, our DANet-C
outperforms Origin by up to 13.24\% and our DANet-D under ResNet-34
setting outperforms tdBN under all settings with less training cost.
We do not further increase their depth
since the results are sufficient to demonstrate the
superiority of DA connection.

\begin{table*}[t]
    \centering
    \caption{Performance Comparison between different dense operations. (``-'' means that such experiments are not conducted.)}
    \label{table3}
    \resizebox{1.6\columnwidth}{!}{
        \begin{tabular}{ccccccccccccc}
        \toprule

        \multirow{2}{*}{\bf Architecture} & \multicolumn{2}{c}{\bf ImageNet-5} & \multicolumn{2}{c}{\bf ImageNet-20} & \multicolumn{2}{c}{\bf ImageNet-100} & \multicolumn{2}{c}{\bf ImageNet-1k} \\
        
        & Add & Concate & Add & Concate & Add & Concate & Add & Concate \\
        
        \midrule

        ResNet-18/DenseNet-121  & {\bf 97.22} & 94.92 & {\bf 84.98} & 83.30 & {\bf 79.94} & 79.82 & - & 47.78 \\
        ResNet-34/DenseNet-201  & {\bf 96.88} & 95.70 & {\bf 85.06} & 84.04 & 81.69 & {\bf 81.88} & {\bf 57.18} & 49.89 \\
        ResNet-50/DenseNet-161  & 95.70 & {\bf 97.66} & 83.23 & {\bf 85.32} & 81.81 & {\bf 84.16} & {\bf 61.79} & - \\

        \bottomrule
    \end{tabular}
    }
    
\end{table*}

\begin{figure*}[t!]
  \centering

  \subfloat[]{
  \label{gs_res101}
  \centering
  \includegraphics[width=0.66\columnwidth]{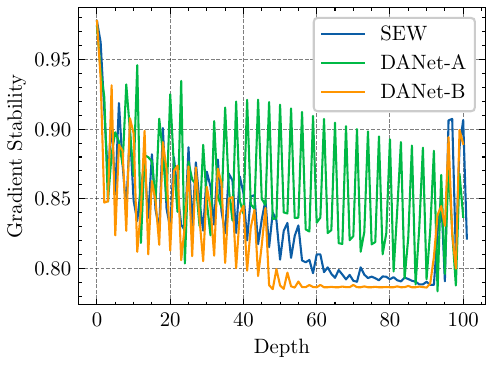}
  }
  \subfloat[]{
  \label{gs_res152}
  \centering
  \includegraphics[width=0.66\columnwidth]{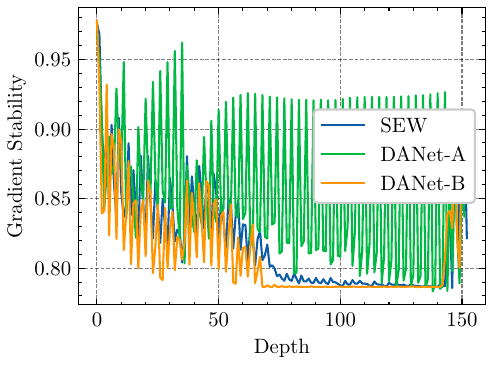}
  }
  \subfloat[]{
  \label{overall_gs}
  \centering
  \includegraphics[width=0.66\columnwidth]{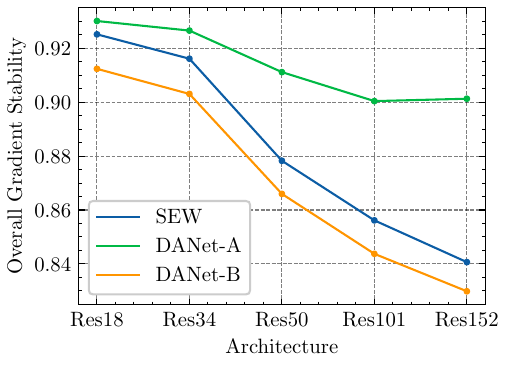}
  }

  
  \caption{Gradient stability of SEW, DANet-A, and DANet-B. (a)-(b) depict the layer-wise gradient stability of different
  backbones and (c) demonstrates the overall gradient stability. (a) ResNet-101. (b) ResNet-152. (c) Overall gradient stability.}
  \label{fig8}
\end{figure*}



\begin{figure*}[t]
    \centering
  \subfloat[]{
  \centering
  \includegraphics[width=0.66\columnwidth]{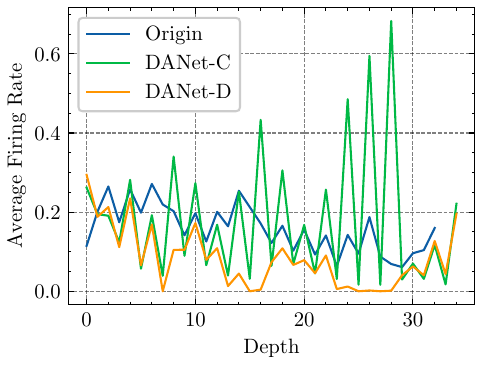}
  }
  \subfloat[]{
  \centering
  \includegraphics[width=0.66\columnwidth]{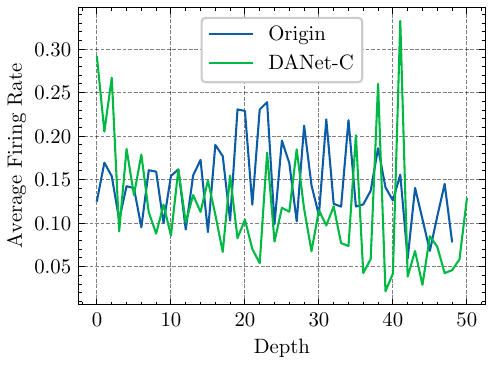}
  }
  \caption{Layer-wise firing rates of Origin, DANet-C, and DANet-D. (a) ResNet-34. (b) ResNet-50.}
  \label{fig10}
\end{figure*}

\begin{figure*}[t]
    \centering
  \subfloat[]{
  \centering
  \includegraphics[width=0.66\columnwidth]{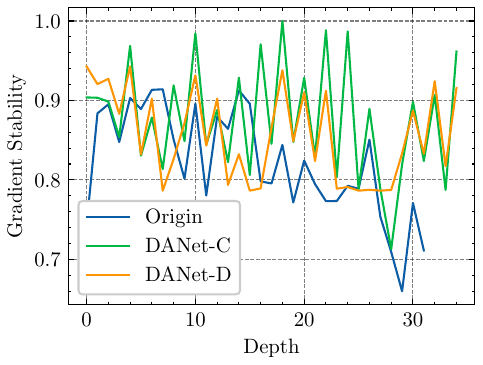}
  }
  \subfloat[]{
  \centering
  \includegraphics[width=0.66\columnwidth]{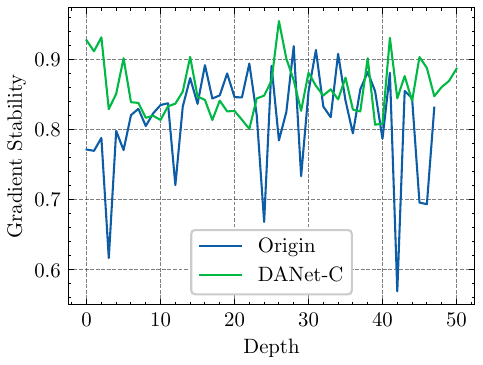}
  }
  \caption{Layer-wise gradient stability of Origin, DANet-C, and DANet-D. (a) ResNet-34. (b) ResNet-50.}
  \label{fig11}
\end{figure*}

\begin{figure*}[t]
  \centering
  \subfloat[]{
  \label{fig14:Ori}
  \includegraphics[width=0.50\columnwidth]{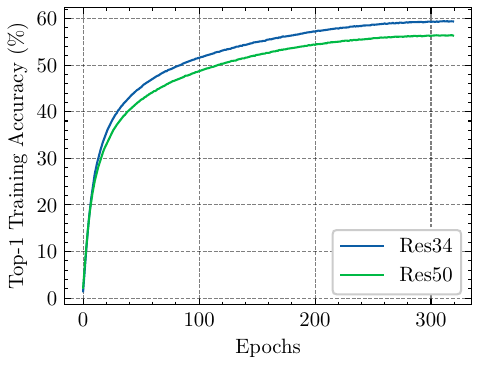}
  \includegraphics[width=0.50\columnwidth]{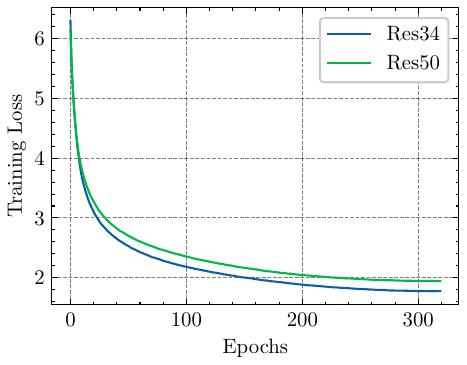}
  }
  \subfloat[]{
  \label{fig14:DAC}
  \includegraphics[width=0.50\columnwidth]{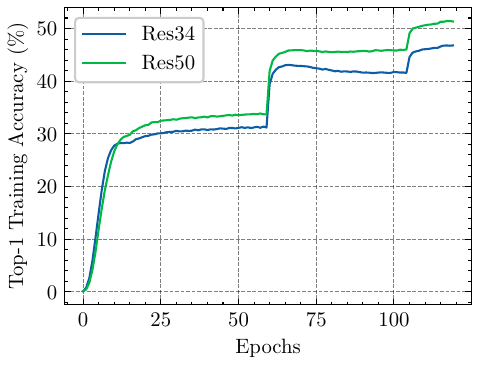}
  \includegraphics[width=0.50\columnwidth]{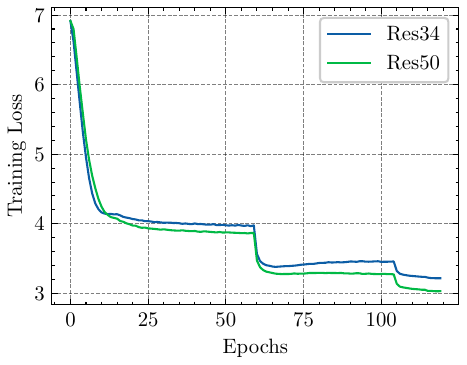}
  }
  \caption{Training loss and top-1 training accuracy of Origin and DANet-C on ImageNet-1k. (a) Origin. (b) DANet-C.}
  \label{fig12}
\end{figure*}

  
          
          
  
  
\subsection{Ablation Study of Dense Operation}
\label{ablation:DO}
As demonstrated in Tab.~\ref{table3}, we adopt
three pairs of architectures to compare the
effect of different dense operations in training large-scale
SNNs. For concatenation operation, we adopt the architectures
from~\cite{huang2017densely} that have comparable parameters
to ResNet-18/34/50 and modify them to their SNN versions.
We can observe that the concatenation operation performs well
on smaller datasets, but on large-scale dataset,
the concated sparse binary feature maps limit the representational
capability of the networks and result in their
poor performance on ImageNet-1k.
Although concatenation obtains considerable performance
in ANNs, the unique features of SNNs
results in the final huge performance gap (7.29\%)
on ImageNet-1k, which indicates that
dense addition is a better choice for training large-scale
SNNs. Since we are studying large-scale SNNs in this paper,
we adopt addition as our default dense operation.
Nevertheless, we conjecture that the sparse feature maps also act as a role of regularization.
It enables the performance of resultant networks to increase as networks get larger even on small datasets,
which is opposite to the additive networks and makes concatenation a promising choice on small datasets.

\subsection{Quantitative Analysis on Densely Additive Connection}

\begin{table}[t]
  \centering
  \caption{Quantitative comparison between SEW, DANet-A, and DANet-B on gradient stability.}
  \label{table5}
  \resizebox{0.9\columnwidth}{!}{
      \begin{tabular}{cccc}
      \toprule

      \multirow{2}{*}{\bf Architecture} & \multicolumn{3}{c}{\bf Gradient Stability} \\
      
      & SEW & DANet-A & DANet-B  \\
      
      \midrule

      ResNet-18  & {\bf 0.0951} & 0.0902 & 0.0911 \\
      ResNet-34  & 0.0075 & {\bf 0.0082} & 0.0051 \\
      ResNet-50  & 0.0007 & {\bf 0.0020} & 0.0005 \\
      ResNet-101 & $7.5\times 10^{-9}$ & {\boldmath $2.5\times 10^{-7}$} & $2.2\times 10^{-9}$ \\
      ResNet-152 & $5.2\times 10^{-14}$ & {\boldmath $4.1\times 10^{-11}$} & $1.3\times 10^{-14}$ \\

      \bottomrule
  \end{tabular}
  }
\end{table}

\begin{table}[t]
  \centering
  \caption{Quantitative comparison between Origin, DANet-C, and DANet-D on gradient stability.}
  \label{table6}
  \resizebox{0.9\columnwidth}{!}{
      \begin{tabular}{cccc}
      \toprule

      \multirow{2}{*}{\bf Architecture} & \multicolumn{3}{c}{\bf Gradient Stability} \\
      
      & Origin & DANet-C & DANet-D  \\
      
      \midrule

      ResNet-34  & $1.2\times 10^{-5}$ & {\bf 0.0088} & 0.0040 \\
      ResNet-50  & $1.0\times 10^{-6}$ & {\bf 0.0004} & - \\

      \bottomrule
  \end{tabular}
  }
\end{table}

\begin{table*}[t]
  \centering
  \caption{Comparison with SOTA SNN methods on ImageNet-1k dataset.}
  \label{table4}
  \resizebox{1.5\columnwidth}{!}{
  \begin{tabular}{cccccc}
      \toprule
      Arch. & Connection & Method & T & Par.(M) & Accuracy(\%) \\

      \midrule
      \multirow{6}*{ResNet-18} & 
      DANet-B\cite{meng2022training} & DSR & 50 & \multirow{6}*{11.69} & \textbf{67.74} \\ &
      tdBN\cite{guo2022reducing} & \multirow{5}*{Directly training} & 4 &  & 64.78 \\ &
      SEW\cite{fang2021deep} &  & 4 &  & 63.18 \\ &
      SEW\cite{10.1007/978-3-031-19775-8_4} &  & 4 &  & 63.68 \\ &
      MS\cite{hu2021advancing} &  & 6 &  & 63.10 \\ & 
      \textbf{DANet-A (this work)} &  & \textbf{4} &  & \textbf{66.48} \\
      
      \midrule
      \multirow{12}*{ResNet-34} & 
      \multirow{2}*{tdBN\cite{zheng2021going}} & \multirow{12}*{Directly training} & 6 & 86.13 & 67.05 \\ &
       &  & 6 & \multirow{11}*{21.80} & 63.72 \\ &
      tdBN\cite{guo2022reducing} &  & 4 &  & 65.54 \\ &
      tdBN\cite{guo2022recdis} &  & 6 &  & 67.33 \\ &
      tdBN\cite{NEURIPS2022_010c5ba0} &  & 6 &  & 67.43 \\ &
      tdBN\cite{li2021differentiable} &  & 6 &  & 68.19 \\ &
      SEW\cite{fang2021deep} &  & 4 &  & 67.04 \\ &
      SEW\cite{10.1007/978-3-031-19775-8_4} &  & 4 &  & 67.69 \\ &
      MS\cite{hu2021advancing} &  & 6 &  & 69.42 \\ &
      \textbf{DANet-A (this work)} &  & \textbf{4} &  & \textbf{71.22} \\ &
      \textbf{DANet-C (this work)} &  & \textbf{4} &  & \textbf{68.26} \\ &
      DANet-D (this work) &  & 4 &  & 65.50 \\ 

      \midrule
      \multirow{4}*{ResNet-50} & 
      SEW\cite{fang2021deep} & \multirow{4}*{Directly training} & 4 & \multirow{4}*{25.56} & 67.78 \\ &
      \multirow{2}*{\textbf{DANet-A(this work)}} &  & \textbf{1} &  & \textbf{67.70} \\ &
       &  & \textbf{4} &  & \textbf{73.71} \\ &
      \textbf{DANet-C(this work)} &  & \textbf{4} &  & \textbf{70.90} \\

      \midrule
      ResNet-104 &
      MS\cite{hu2021advancing} & \multirow{4}*{Directly training} & 5 & 77.35 & 74.21 \\
      \multirow{3}*{ResNet-101} & 
      SEW\cite{fang2021deep} &  & 4 & \multirow{3}*{44.55} & 68.76 \\ &
      \multirow{2}*{\textbf{DANet-A(this work)}} &  & \textbf{1} &  & \textbf{70.64} \\ &
      &  & \textbf{4} &  & \textbf{76.13} \\

      \midrule
      \multirow{3}*{ResNet-152} & 
      SEW\cite{fang2021deep} & \multirow{3}*{Directly training} & 4 & \multirow{3}*{60.19} & 69.26 \\ &
      \multirow{2}*{\textbf{DANet-A(this work)}} &  & \textbf{1} &  & \textbf{71.62} \\ &
       &  & \textbf{4} &  & \textbf{77.22} \\

      \bottomrule
  \end{tabular}
  }
\end{table*}

\subsubsection{DANet-A/B}
\label{qa:AB}
To further demonstrate our analysis of the firing rates'
influence on gradients and the property of the discussed residual
connections, we intuitively visualize the gradient
stability of SEW, DANet-A, and DANet-B in Fig.~\ref{fig8}.
We denote the output and the membrane potential before spike generation
of the SNs in $l$-th layers as $O^l$ and $V^l$.
Based on the analysis in {\bf Proposition~\ref{proposition2}},
we can know that $\frac{\partial O^l}{\partial V^l}$
is the most representative component to illustrate the stability of
gradients. Thus here we take its amplitude
on behalf of gradient stability. Note that the closer the amplitude is to 1,
the more stable the gradients.

We can observe that the results are consistent with the results of firing rates.
DANet-A has much more stable gradients than SEW and
DANet-B and exhibits a clear pattern
as in firing rates.
The deeper the networks,
the clearer the differences.
To illustrate this idea more intuitively, we present further
quantitative comparison in Tab.~\ref{table5}.
Since backpropagation adopts the chain rule to propagate gradients,
applying the chain rule to gradient stability is an intuitive way to
represent the overall stability of the networks
quantitatively, and that is what we do in Tab.~\ref{table5}.
We can find that, as the increase of depth,
the gap between SEW and DANet-A is becoming larger and larger. When it
comes to ResNet-152, the accumulated result of DANet-A
is three orders of magnitude higher than that of SEW and DANet-B.
These properties bring strong learning capability and
better performance for DANet-A

\subsubsection{DANet-C/D}
\label{qa:CD}
In Fig.~\ref{fig10}, we compare the firing rates of Origin, DANet-C, and DANet-D. We can find that, DANet-C has adequate
firing rates for training large-scale SNNs, while the additional interblock BN layer in DANet-D alleviates the effect of
DA connection and ultimately results in suboptimal performance on ImageNet-1k. Note that although Origin has higher firing
rates than DANet-D, the propagated gradients in DANet-C and DANet-D manage to reach shallower layers directly through
DA connection, which avoids vanishing gradients. As shown in Fig.~\ref{fig11}, DANet-C and DANet-D have much more stable
gradients than Origin. And similar to Sec.~\ref{qa:AB}, we further quantitatively compare the gradient stability between
Origin, DANet-C, and DANet-D in Tab.~\ref{table6}. DA connection enables DANet-C and DANet-D to obtain two orders
of magnitude higher results than Origin and also solves the degradation problem in Origin. As shown in
Fig.~\subref*{fig14:Ori}, as the increase of depth, the training accuracy of Origin decreases and the training loss increases,
which demonstrates that deeper networks have difficulty in learning. But in Fig.~\subref*{fig14:DAC}, we can find that the
degradation problem is well solved by DA connection. Deeper networks with stronger representational capability manage
to obtain higher training accuracy and lower training loss, indicating that the learning issue in Origin is
effectively alleviated by DA connection.

\subsection{Comparison with State-of-the-Art SNNs}
\label{SOTA}
In Tab.~\ref{table4}, we compare our results with SOTA SNN methods on ImageNet-1k.
At a low training expense,
we manage to outperform other works by a large margin under all settings,
except \cite{meng2022training}, which spends 15 times our training cost.
Notably, we outperform SEW-ResNet using about a quarter of their training cost
by a large margin under all settings.
When the depth reaches 50 layers, we can achieve comparable or even better
performance than SEW-ResNet with only 1 time step and one-tenth of their training cost.
The performance gap keeps growing with the
increasing depth and our DANet-A manages to outperform SEW-ResNet by up to 7.96\%.
Moreover, our DANet-C also manages to obtain
competitive performance and outperforms other works under all settings,
further demonstrating the effectiveness of DA connection.
Although MS-ResNet has a similar topology as our DANet-A, the adopted tdBN
makes {\bf Proposition~\ref{proposition3}} no longer valid and greatly changes
its behavior. Under the ResNet-18 setting, MS obtains even lower accuracy than SEW
with 1.5 times their time steps. While we outperform MS-ResNet under all settings
by a considerable margin with less than half of their training cost. When it comes
to ResNet-101/104, our DANet-A outperforms MS-ResNet with a much smaller model
and a simpler training recipe.

\section{Conclusion and Discussion}
\subsection{Conclusion}
In this paper, we analyze six different residual connections
and substantiate our findings empirically through extensive experiments.
Then, building upon our observations, we abstract the best-performing residual connections
into densely additive (DA) connection and extend it to other topologies,
culminating in the introduction of the DANet family.
Our DANet variants exhibit superior performance on the ImageNet dataset,
all the while incurring a minor training cost.
Additionally, in order to provide an elaborate methodology for designing the topology of
large-scale SNNs, we expound upon the
applicable scenarios for different DANets and elucidate their advantages over preexisting methods.
We anticipate that this work will offer valuable insights for future works
to design the topology of their networks and, consequently, foster the advancement of large-scale SNNs.

\subsection{Discussion}
In this study, we mainly focus on the topology of large-scale SNNs
and base our architectures on the ResNet family from ANNs ({\em e.g.} ResNet-50).
But since the behavior of SNNs is quite different from ANNs,
prior intuition built upon ANNs is not available for SNNs and
new architectures specialized for SNNs
are worth studying, where our DA connection can also be applied.
Furthermore, our analysis has illuminated that even minor modifications to the network
topology exert a significant influence on the spiking patterns exhibited by SNNs.
These spiking patterns, in turn, play a pivotal role in shaping the learning process of SNNs.
Hence, there is considerable potential for the development of techniques geared towards rectifying firing rates,
ultimately facilitating the training of large-scale SNNs in a more precise and effective manner.

\section*{ACKNOWLEDGMENTS}
This research was funded by the STI 2030-Major Projects 2022ZD0208700, and the Beijing Municipal Natural Science Foundation under Grand No. 4212043.

{\small
\bibliographystyle{IEEEtranS}


}

\end{document}